\newtheorem{theorem}{Theorem}[section]
\newtheorem{definition}[theorem]{Definition}
\title{Know the Ropes: A Heuristic Strategy for LLM-based Multi-Agent System Design}
\author{
Zhenkun Li$^{1}$\thanks{Equal contribution as first authors.}\thanks{Corresponding authors.} \\ 
University of South Florida \\
\texttt{zhenkun@usf.edu} \And
Lingyao Li$^{*\dagger}$ \\ 
University of South Florida \\
\texttt{lingyaol@usf.edu} \AND
Shuhang Lin \\ 
Rutgers University \\
\texttt{shuhang.lin@rutgers.edu} \And
Yongfeng Zhang \\ 
Rutgers University \\
\texttt{yongfeng.zhang@rutgers.edu}\\
}
\begin{document}

\maketitle

\begin{abstract}

Single‑agent LLMs hit hard limits—finite context, role overload, brittle domain transfer. Conventional multi‑agent fixes soften those edges yet expose fresh pains: ill‑posed decompositions, fuzzy contracts, and verification overhead that blunts the gains. We therefore present Know‑The‑Ropes (KtR), a framework that converts domain priors into an algorithmic blueprint hierarchy: tasks are recursively split into typed, controller‑mediated subtasks, each solved zero‑shot or with the lightest viable boost (chain‑of‑thought, micro‑tune, self‑check). Grounded in the No‑Free‑Lunch theorem, KtR trades the chase for a universal prompt for disciplined decomposition. On a Knapsack benchmark (3–8 items) three GPT‑4o‑mini agents raise accuracy from 3\% zero‑shot to 95\% on size‑5 instances after patching a single bottleneck agent. On the tougher Task‑Assignment suite (6–15 jobs) a six‑agent o3‑mini blueprint hits 100\% up to size 10 and $\geq $84\% on sizes 13–15, versus $\leq$11\% zero‑shot. Algorithm‑aware decomposition plus targeted augmentation thus turns modest models into reliable collaborators—no ever‑larger monoliths required.
\end{abstract}

\section{Introduction}

Individual large language model (LLM) agents typically excel in the specific domain they are optimized for \cite{thirunavukarasu2023large, kasneci2023chatgpt, wu2023bloomberggpt}, yet they struggle to achieve universal versatile \cite{zhang2024chain,xu2024hallucination}. A natural antidote is division of labor—splitting tasks into specialized agents that negotiate a joint answer—and early frameworks like Mixture-of-Agents indeed show that a well-orchestrated team can outperform its best member \cite{wang2024mixture, guo2024large, bo2024reflective}.

Yet large-scale audits have cooled that optimism. Firstly, each problem needs a well-designed prompt with significant manual effort. Even though some multi-agent frameworks seems to achieve satisfying results, when evaluation leakage and prompt over-fitting are removed, the headline boosts of naïve agent swarms slip to single-digit percentages—and can even turn negative once a task needs more than two or three coordination rounds \cite{pan2025multiagent,zhu2025multiagentbench}. Post-mortems trace the shortfall to a recurring pattern: ill-posed decompositions propagate ambiguity, loose role definitions create blind spots or duplication, verification layers either invoke brittle chain-of-thought heuristics or blow up the token budget, and every extra message compounds latency and cost almost quadratically with the number of rounds \cite{ye2025task,shu2024towards}. In short, simply throwing more “brains’’ at a problem does not guarantee progress; sustainable gains demand disciplined, principled systems engineering—the very gap our work aims to close.

Know-The-Ropes (KtR) converts domain priors into an algorithmic hierarchy: recursively split a task until each leaf fits the base LLM’s zero-shot reach or the lightest augmentation (chain-of-thought \cite{wei2022chain}, small fine-tune, self-check loops, etc). Typed I/O contracts, enforced by a lightweight controller, isolate agents and prevent cross-talk, context bloat, or silent overwrites. Multi-agent system(MAS) construction thus resembles classic pipeline tuning—pinpoint the bottleneck, refine the split, and attach the cheapest fix that works.

Theory justifies the approach. The No-Free-Lunch theorem \cite{wolpert1997no,wolpert2021important} dictates that no single agent or orchestration rule prevails across all problem distributions—there is no silver bullet. Robustness must therefore flow from exploiting domain structure, not from ever-larger prompt templates. KtR meets this mandate by re-using classical algorithms whose behavior is already well understood and optimized.

Our empirical preview includes the following:

$\bullet$ {\bf Knapsack Problem (proof of concept with a lightweight base model).}
On 3–8-item instances GPT-4o-mini scores only 60 \% $\to$ 0 \% zero-shot, and a blunt task-level fine-tune barely helps. Applying KtR yields a three-agent blueprint; fine-tuning just the “trimmer’’ on 1 200 examples lifts accuracy to 95 \%–70 \%, showing that KtR can turn a compact, low-capacity model into a high-performing multi-agent system.

$\bullet$ {\bf Task-Assignment Problem (proof of scalability with a stronger base model).}
With o3-mini, a six-agent KtR blueprint tackles sizes 6–15. Splitting one bottleneck agent into two finer leaves drives those leaves to 100 \% and 97 \% accuracy and raises overall system accuracy to $\geq$ 84 \% across all sizes—demonstrating that KtR’s gains grow with base-model capacity and that the knapsack results are not an isolated success.

Our contributions lie in two aspects. 

$\bullet$ {\bf KtR framework}. By formalizing blueprint hierarchies, we apply NFL-grounded algorithmic design that bypasses documented MAS pitfalls, streamlines the assembly pipeline, and relieves performance bottlenecks.

$\bullet$ {\bf Empirical validation}. Across two canonical optimization problems, KtR transforms modest base models into systems that match—or exceed—their fine-tuned counterparts while using orders-of-magnitude less specialized data.






\section{Related Work}

Multi-Agent Systems (MAS) have been widely employed to enhance the capabilities of LLMs to tackle complex tasks \cite{qiu2024llm, yan2024opencity, ma2024coevolving, lin-etal-2024-battleagent, hua2023war, yu2024aipatient}. This is because MAS typically distribute tasks across agents that collaborate to achieve a common goal, thereby improving both efficiency and adaptability. Recent frameworks like CAMEL \cite{li2023camel} enable role-based cooperative dialogues by assigning agents distinct personas, while AutoGen \cite{wu2023autogen} and MetaGPT \cite{hong2023metagpt} orchestrate multi-role agent teams through structured conversation loops and predefined workflows. In math optimization, OR-LLM-Agent can translate natural‐language problem descriptions into formal Gurobi models---achieving an 85\% correct‐solution rate on real‐world benchmarks \cite{zhang2025or}.

Despite this excitement, studies show that simply scaling up to LLM-based MAS often yields only marginal gains over single-agent baselines \cite{pan2025multiagent}. LLM agents still struggle with context management and consistency, meaning that elaborate multi-agent prompts can fail to realize the intended collaboration \cite{bo2024reflective}. A recent systematic audit of popular MAS frameworks has identified 14 distinct failure modes \cite{cemri2025multi}, which can be grouped into three categories, including flawed design (e.g., ambiguous role definition), inter-agent misalignment (e.g., communication failures), and quality control (e.g., no reliable check mechanism).


To address these challenges, researchers have proposed multiple strategies to make LLM-based MAS more reliable \cite{zhu2025multiagentbench, tran2025multi}. A key strategy is improving the agent interaction structure \cite{zhu2025multiagentbench}. For example, the AgentDropout framework proposes a dynamic agent‐pruning strategy, which seeks to discard less critical actors during training \cite{wang2025agentdropout}. Another effective strategy is incorporating feedback and verification loops \cite{hong2023metagpt}. A recent study shows that frameworks with strong role specialization and iterative feedback mechanisms tend to outperform those without these features \cite{anonymous2025code}. In addition, systematic evaluations suggest that the communication topology matters: a well-designed protocol between agents can significantly improve collective performance on complex tasks \cite{zhu2025multiagentbench}.


While existing multi-agent frameworks and strategies have demonstrated notable progress \cite{li2025knowledge, hong2023metagpt, zhu2025multiagentbench}, they still face challenges in dynamic role reallocation and efficient inter-agent communication. These limitations become especially pronounced when addressing complex tasks, such as NP-hard optimization problems. To bridge this gap, our work proposes a heuristic strategy that embeds domain-specific rules and algorithms directly into agent coordination. This approach enables on-the-fly role adaptation and task decomposition, particularly in math optimization contexts where conventional MAS frameworks often struggle.





\section{Methodology}

\subsection{Framework Design---Know the Ropes}

We propose the heuristic framework ``Know the Ropes.'' This framework offers a structured methodology for designing specialized MAS leveraging LLMs. This heuristic focuses on translating known, effective procedures or algorithms into a coherent multi-agent architecture. As presented in Figure \ref{fig:illustration}, the core idea is to decompose a complex overall task into its fundamental computational stages. Each stage is then mapped to a well-formulated sub-task, designed to be tractable for an individual agent. These specialized agents are subsequently orchestrated to mirror the data/control flow of the original procedure, which can effectively embed problem-solving logic into the multi-agent system. The following definitions formalize the components of this framework.

\begin{figure*}[htbp]
    \centering
    \includegraphics[width=0.86\textwidth]{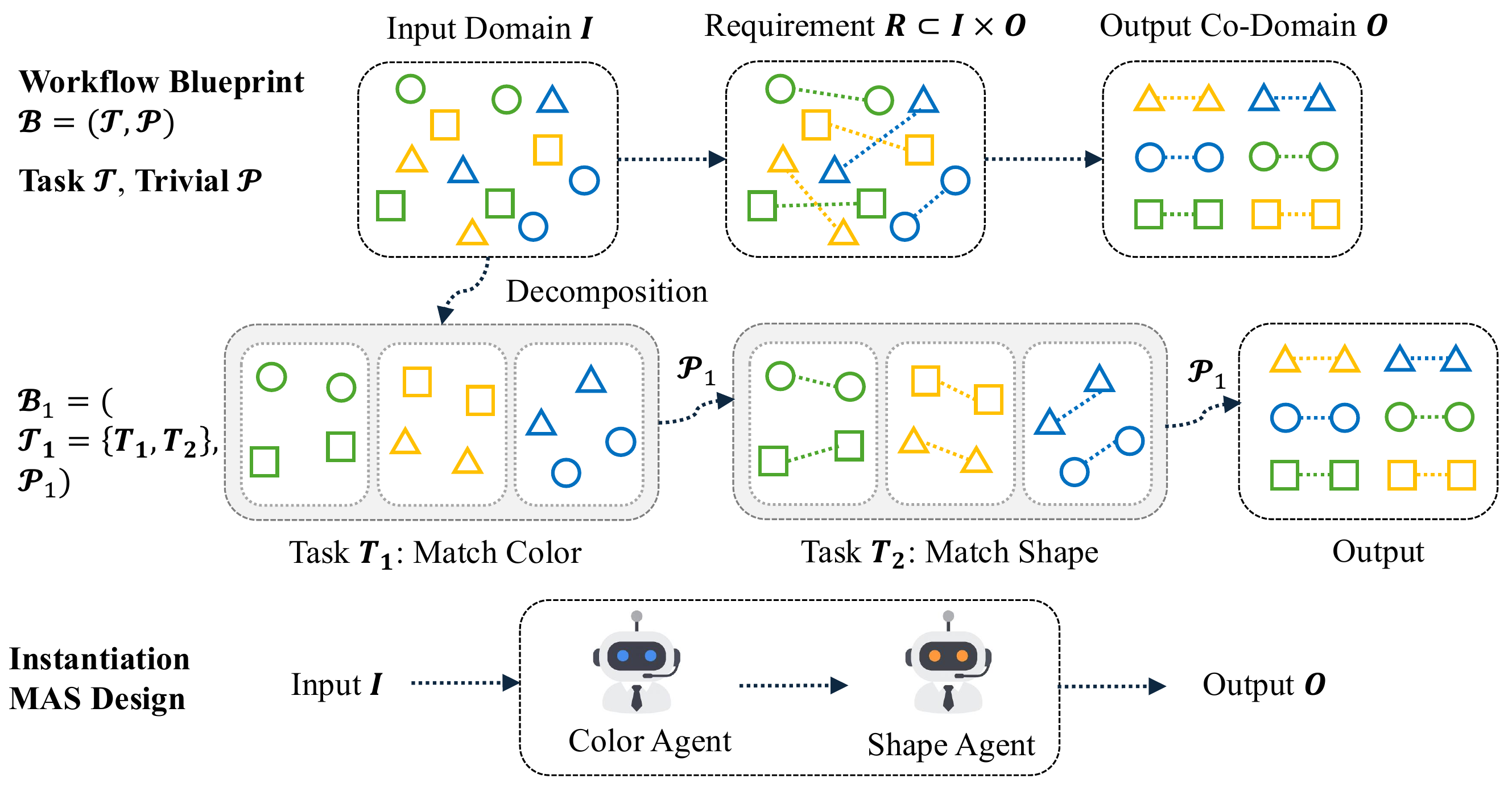}
    \caption{Illustration of the Know-The-Ropes (KtR) strategy: heuristic, prior-guided decomposition of a complex task into sub-tasks, each instantiated as a coordinated LLM agent within a multi-agent architecture.}
    \label{fig:illustration}
    \vspace{-0.6em}
\end{figure*}

\begin{definition}
	A {\bf well-formulated task} is a tuple $T = (I, O, R)$, consisting of the following: 
    
$\bullet$ Input domain ${I}$: an unambiguous description of all admissible inputs

$\bullet$ Output co-domain ${O}$ an unambiguous description of all admissible outputs

$\bullet$ Requirement relation ${R}\subset {I}\times {O}$: a relation such that for each input $x\in{I}$ it defines explicitly the subset $R(x)\subset O$ as the set of outputs that are considered correct.
\end{definition}

\begin{definition}
	A {\bf workflow blueprint} $\mathcal{B} = (\mathcal{T},\mathcal{P})$ consisting of the following.

$\bullet$ A finite set of well-formulated tasks 
$\mathcal{T} = \{T_1,\cdots, T_n\}$.

$\bullet$ An orchestration protocol $\mathcal{P}$ that specifies:

\quad$-$ The control-flow graph that determines when each $T_i$ is invoked.

\quad$-$ The data-dependency edges that map outputs of some tasks to inputs of others.

\quad$-$ Any global invariants, error-handling rules, or communication channels required to realize the end-to-end objective of $\mathcal{B}$.

\end{definition}

\begin{definition}
	Given a workflow blueprint $\mathcal{B} = (\mathcal{T},\mathcal{P})$, a {\bf decomposition} selects a task $T\in \mathcal{T}$ and replace it with a sub-blueprint $\mathcal{B}_T=(\mathcal{T}_T,\mathcal{P}_T)$ such that the following hold.

$\bullet$ Each task $T'\in \mathcal{T}_T$ is strictly simpler than $T$.

$\bullet$ The composite protocol $\mathcal{P}'$, obtained by embedding $\mathcal{P}_T$ in place of $T$ inside $\mathcal{P}$, preserves all external interface of $T$.

The result of the decomposition is a new blueprint $\mathcal{S}' = (\mathcal{T}',\mathcal{P}')$, where $\mathcal{T}'=(\mathcal{T}\backslash \{T\})\cup\mathcal{T}_T$. We record this process as
\[
\mathcal{B}\stackrel{D}{\leadsto}\mathcal{B}'
\]
\end{definition}

\begin{definition}\label{defn: M-tractable}
	Let $\mathcal{M}$ be a set of LLM models. A well-formulated task $T$ is said to be {\bf $\mathcal{M}$-tractable} if a model inside $\mathcal{M}$ satisfies the requirement relation $R_T$ with high, empirically verified accuracy, after optional augmentations (e.g., chain-of-thought prompting, tool calls, self-reflection loops, or fine-tuning).
\end{definition}

\begin{definition}
	Given a set of LLM models $\mathcal{M}$ and a blueprint $\mathcal{B}$, an {\bf $\mathcal{M}$-tractable hierarchy} is a sequence of decompositions
	\[
	\mathcal{B}\stackrel{D_1}{\leadsto}\mathcal{B}_1\stackrel{D_2}{\leadsto}\cdots\stackrel{D_n}{\leadsto}\mathcal{B}_n
	\]
	such that each task in the terminal blueprint $\mathcal{B}_n$ is $\mathcal{M}$-tractable in the sense of Definition \ref{defn: M-tractable}.
\end{definition}

\begin{definition}
	Given a set of LLM models $\mathcal{M}$ and an $\mathcal{M}$-tractable blueprint $\mathcal{B} = (\mathcal{T},\mathcal{P})$, a {\bf system instantiation} is to instantiate $\mathcal{B}$ into a MAS in the following way.

$\bullet$ We create one agent $A_i$ per tractable task $T_i\in \mathcal{T}$, bundling any necessary augmentations with the agent.

$\bullet$ We implement the orchestration protocol $\mathcal{P}$ as message-passing or function calls among agents, preserving data-dependencies and control flow.

\end{definition}

\begin{algorithm}[htbp]
\small
  \caption{Know-The-Ropes (KtR) Pseudo-code}
  \label{alg:ktr}

  \begin{algorithmic}[1]          
    \Procedure{KnowTheRopes}{$T, \mathcal{M}$}
      \State $B \gets \textsc{CreateBlueprint}(\{T\}, \textit{trivial\_protocol})$
      \While{$\exists$ $U \in B.\text{tasks}$ \& $\neg\textsc{MTractable}(U, \mathcal{M})$}
        \State $U^{\ast} \gets \textsc{ChooseTaskToDecompose}(U)$
        \State $B_{\text{sub}} \gets \textsc{DesignSubBlueprint}(U^{\ast})$
        \State $B \gets \textsc{EmbedSubBlueprint}(B, U^{\ast}, B_{\text{sub}})$
      \EndWhile

      \State $\text{MAS} \gets \textsc{InstantiateSystem}()$
      \ForAll{$V \in B.\text{tasks}$}
        \State $\textit{aug}   \gets \textsc{SelectAugmentations}(V,\mathcal{M})$
        \State $\textit{agent} \gets \textsc{CreateAgent}(V,\mathcal{M},\textit{aug})$
        \State $\text{MAS}.\text{AddAgent}(\textit{agent})$
      \EndFor
      \State \textsc{ImplementProtocol}$(\text{MAS}, B.\text{protocol})$
      \State \Return MAS
    \EndProcedure
  \end{algorithmic}
\end{algorithm}

{\bf Our method.} For a given (well-formulated) task $T$ and a given set of LLM models $\mathcal{M}$ that will be used to solve the task, we perform the following. (See Algorithm \ref{alg:ktr} and Figure \ref{fig:illustration})

$\bullet$ Define the initial blueprint $\mathcal{B} = (\mathcal{T},\mathcal{P})$, where $\mathcal{T}=\{T\}$ and $\mathcal{P}$ is trivial.

$\bullet$ Guided by domain heuristics, prior knowledge, and experiments on specific tasks, we construct an $\mathcal{M}$-tractable hierarchy
	\[
	\mathcal{B}\stackrel{D_1}{\leadsto}\mathcal{B}_1\stackrel{D_2}{\leadsto}\cdots\stackrel{D_n}{\leadsto}\mathcal{B}_n
	\]
    
$\bullet$ Materialize the terminal blueprint $\mathcal{B}_n$ as a MAS $\mathbf{MAS}(\mathcal{B}_n,\mathcal{M})$ that targets the initial task $T$.

This three-step procedure—atomic blueprint, tractable hierarchy construction, and system instantiation—provides a principled pathway from an arbitrarily complex task to a deployable multi-agent solution whose correctness hinges on model capabilities that have been explicitly validated. To demonstrate the practical application and efficacy of the ``Know the Ropes'' framework in creating such specialized MAS, we present two case studies. In each case, a well-understood algorithmic solution to a complex problem is decomposed into an M-tractable hierarchy and instantiated as a MAS.





\section{Experiment Design}

\subsection{Proof-of-Concept: 0/1 Knapsack Problem (KSP)}
To furnish a clear proof-of-concept for KtR, we start with the classical NP-hard Knapsack Problem (KSP)—a staple in resource allocation, logistics, and investment planning. By deliberately using the lightweight, general-purpose GPT-4o-mini as every agent’s backbone, we establish a modest baseline that lets us highlight how KtR’s multi-agent choreography amplifies a small model’s capability well beyond its solo limits.

Below we only present a mathematical formulation of the problem, while a more detailed explanation of the problem as well as the algorithmic solution can be found in Appendix \ref{app: KSP and TAP}.

\subsubsection{Problem Formulation}
For a Knapsack problem of size N, its input involves a weight vector $\vec{w}=(w_1,\cdots,w_N)$, a value vector $\vec{v}=(v_1,\cdots,v_N)$, and a capacity $W$. To formulate the goal, we introduce the state vector $\vec{x}=(x_1,\cdots,x_N)\in \{0,1\}^N$, i.e., all its entries take value in $\{0,1\}$. Then the objective of the problem is to find the following value
\[
Z = \max_{\substack{\vec{x}\in \{0,1\}^N \\ \vec{x}\cdot \vec{w}\leq W}} \vec{x}\cdot \vec{v}.
\]

\subsubsection{KtR Multi-Agent Design}
Following KtR heuristic, the iterative dynamic programming solution for the KSP as in Appendix \ref{sec: KSP solution} is decomposed into tasks for three specialized agents, as presented below. Prompts designed each individual agent are attached in Appendix \ref{app: prompts}.

\textbf{System Controller}: Controller initialize a set of feasible states $S_0 = \{(0,0)\}$ and controls a look on $k$ from $1$ to $N$. For each $k$, it sends $S_{k-1}$ and $(w_k,v_k)$ to Worker Agent, and then send the result plus $W$ to the Trimmer Agent. The Controller then take the union of the output of Trimmer Agent and $S_{k-1}$ to obtain $S_k$. Once all items are processed, the Controller invokes the Reporter Agent for final result. 

\textbf{Worker Agent}: This agent is responsible for the iterative state expansion by the following formula:
\[
S_{add} = \{(w+w_k,v+v_k)~{\rm for~all~}(w,v)\in S_{k-1}\}.
\]

\textbf{Trimmer Agent}: This agent performs the trimming task according to the following formula:
\[
S_{trimmed} = \{(w,v)\in S_{add}~|~w\leq W\}.
\]

\textbf{Reporter Agent}: This agent executes the solution report. It find the element with maximal value within the final state set $S_N$.

\subsection{Proof of scalability---Task Assignment Problem (TAP)}
Building on the previous section—where KtR already stretched the capabilities of the compact GPT-4o-mini on the Knapsack baseline—we now test KtR’s scalability. We upgrade the backbone to the larger o3-mini and tackle the more demanding Task-Assignment Problem (TAP), demonstrating that the framework’s performance rises in lockstep with the underlying model’s capacity.

Again, below we only present a mathematical formulation and details are referred to Appendix \ref{app: KSP and TAP}.

\subsubsection{Problem Formulation}
For a Task assignment problem of size N, its input involves an $N\times N$ cost matrix $C = (C_{ij})_{N\times N}$. We use $\mathfrak{S}_N$ to denote the set of permutations of n elements, or equivalently, the set of bijections from the set $\{1,2,\cdots, N\}$ to itself. The objective of the problem is to find the following value:
\[
Z = \max_{\sigma\in\mathfrak{S}_N}\left(\sum_{i=1}^NC_{i\sigma(i)}\right).
\]

\subsubsection{KtR Multi-Agent Design}\label{sec: TAP agent design}
Algorithm from Appendix \ref{sec: TAP problem solution} now maps to a MAS under our KtR methodology. As explained in Section \ref{sec: TAP multi-agent performance}, based on test results of the agentic tasks and heuristic argument, we further decompose the tasks from the original system design to further improve the performance. The final system design contains six agents described below. Let $N$ be the size of the problem and $C$ be the original cost matrix.

\textbf{Row Reducer}: This agent reduces rows of the matrix $C$ to obtain $C'$.

\textbf{Column Reducer}: This agent further reduces the columns of $C'$ to obtain $C''$.

\textbf{Matcher}: This agent finds a maximal collection of zeroes in the reduced matrix $C''$, such that no two zeroes share same row or column. Let $L$ be the number of zeroes in the maximal collection. 

\textbf{Painter}: When $L<N$, with input from Mather, Painter is prompted for find a minimal collection of rows and columns covering all zeroes.

\textbf{Normalizer}: Receiving input from Painter, Normalizer creates more zeroes outside of the selected rows and columns to obtain an updated matrix $C'''$.

\textbf{Reporter}: When $L=N$, Report sums up values of entries of the original cost matrix $C$ corresponding to the maximal collection of zeroes found by Matcher, and report this sum as the final solution to the problem.

The \textbf{System Controller} arranges task for Row Reducer and Column Reducer linearly, then controls a loop: Matcher first find a set of zeroes and the Controller checks if the number of zeroes $L$ equals the problem size $N$, the size of the problem. If so, the loop is broken and the positions of zeroes is sent to the Reporter for final output. Otherwise, Painters are called in to find a minimal collections of lines to cover the zeroes and Normalizer follows to create more zeroes. Then the Controller iterates the loop and send the updated matrix to Matcher.

\section{Experiment Result}

Our experimental protocol unfolds in two stages. First, we run a uniform benchmark across a suite of baseline models—including several GPT and Llama variants—to fix a reference point for each task. The second stage then splits by objective: For KSP we deliver a proof of concept, while for TAP we provide a proof of scalability. For ground truth, we use python code to randomly generate problems, and then use the Google OR-Tools \cite{ortools} as in Appendix \ref{app: ground-truth} to generate solutions to compare with.

\subsection{Experiment Result for KSP}

\subsubsection{Baseline LLM Performance}

Figure \ref{fig:ksp_baseline} shows the baseline LLM performance across multiple difficulty levels. The accuracy across difficulty levels (from 3 to 8 items) in the KSP scenario reveals substantial performance variation among the tested LLMs. Among those, the GPT-o3-mini, as a reasoning model, consistently demonstrates superior accuracy. Model GPT-4.1 outperforming its smaller counterparts, namely GPT-4.1-mini and its primer GPT-4o-mini. Other LLMs, including Claude-haiku, Llama, and Qwen series also show performance degradation, with higher variability particularly evident at greater difficulty levels. Meanwhile, the performance of final KtR multi-agent system is also drawn in Figure \ref{fig:ksp_baseline}. The comparison shows that {KtR substantially boosts performance, validating its effectiveness.}

\begin{figure}[htbp]
    \centering
    \includegraphics[width=0.5\textwidth]{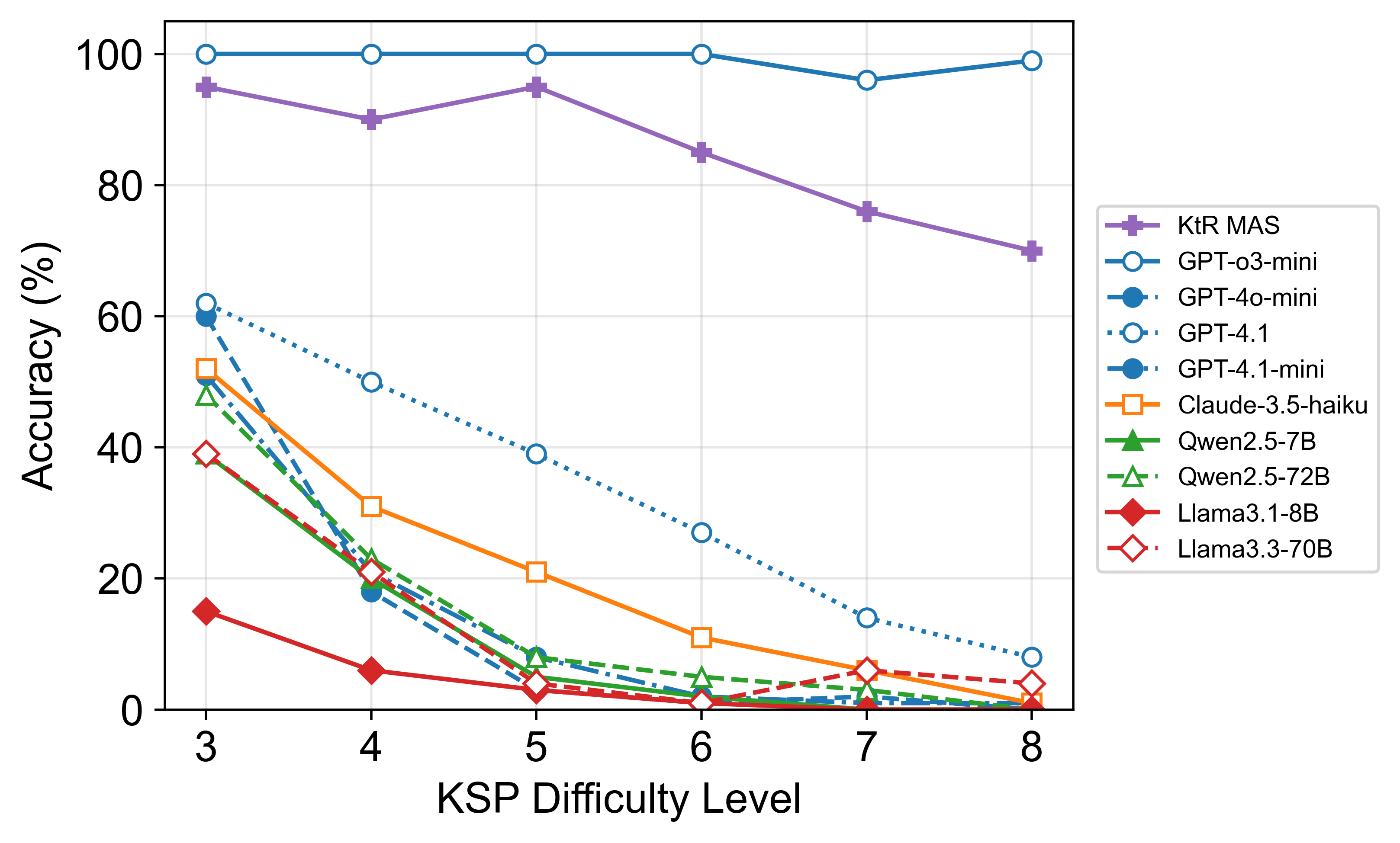}
    \vspace{-2em}
    \caption{KSP baseline performance from single LLMs as well as the KtR mulit-agent system.}
    \label{fig:ksp_baseline}
    \vspace{-1em}
\end{figure}

\subsubsection{Multi-Agent Performance}

Based on Figure~\ref{fig:ksp_baseline}, GPT-4o-mini exhibits a pronounced performance decline beginning at instances of 4 items, underscoring its limited scalability to more complex scenarios; thus, we select it as the baseline model for our KtR framework design. Figure~\ref{fig:ksp_mas} further illustrates the resulting multi-agent system along with the experimental outcomes derived from our proposed strategy.

\begin{figure*}[htbp]
    \centering
    \includegraphics[width=1\textwidth]{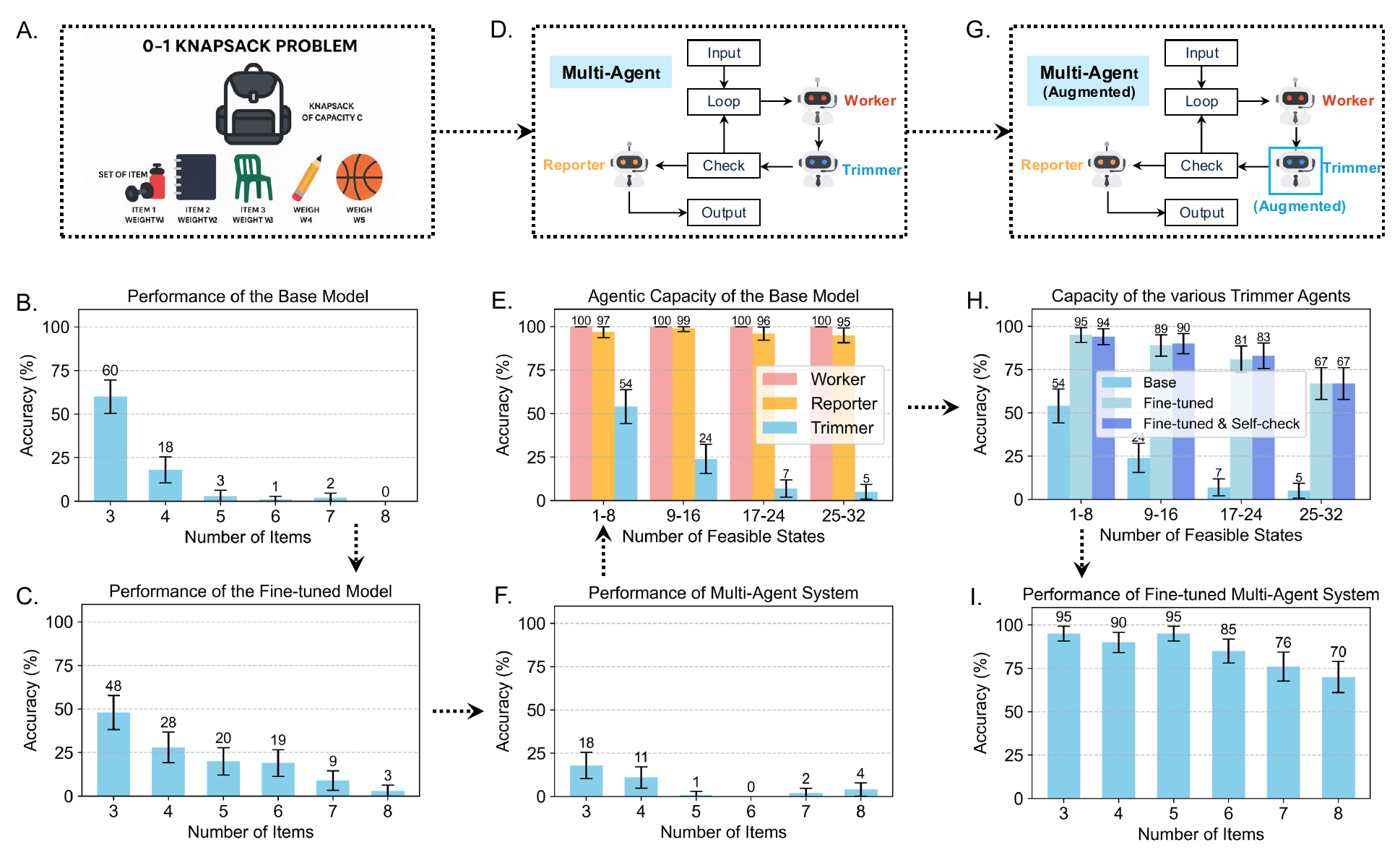}
    \vspace{-1em}
    \caption{KSP evaluation of the KtR stategy.
{\bf B}: Zero-shot accuracy of the baseline model.
{\bf C}: Zero-shot accuracy after a light, task-specific fine-tune of the same model.
{\bf D} \& {\bf G}: Blueprints of the MAS without ({\bf D}) and with ({\bf G}) augmentations.
{\bf E}: Per-agent accuracies before augmentation, revealing the system’s bottleneck.
{\bf H}: Boost delivered by two targeted augmentations—task-level fine-tuning and self-check prompting—applied to the bottleneck agent.
{\bf F} \& {\bf I}: Corresponding test accuracies for the two blueprints.
}
    \label{fig:ksp_mas}
    \vspace{-0.6em}
\end{figure*}

\textbf{Single LLM performance}. We establish two baseline performances for GPT-4o-mini acting as a single agent to solve the KSP. First, the zero-shot GPT-4o-mini is directly prompted with KSP instances. As Figure \ref{fig:ksp_mas}B s shows, its accuracy decreases from 60\% for 3 items to 0\% (8 items). Second, we evaluate a fine-tuned GPT-4o-mini (standalone). Figure \ref{fig:ksp_mas}C indicates that fine-tuning offers some improvement over the zero-shot, but still as low as 3\% for 8-item KSP.


\textbf{Standard MAS performance}. Following our KtR heuristic, we map the algorithm for KSP into a MAS design, illustrated in Figure \ref{fig:ksp_mas}F. Initially, each agent is driven by the standard, non-fine-tuned GPT-4o-mini. The performance of this standard MAS is presented in Figure \ref{fig:ksp_mas}F. Its performance descreases from 18\% for 3 items to 4\% for 8 items. This initial result implies that without augmenting the agents' abilities, the MAS does not effectively handle the task.

We profile each agent in isolation (Figure \ref{fig:ksp_mas}E) and uncover a single choke point: Trimmer. Its accuracy collapses as the feasible-state set $S_k$ (cf. Section \ref{sec: KSP solution}) grows—54 \% for 1–8 states, 24 \% for 9–16, 7 \% for 17–24, and just 5 \% for 25–32. Because the algorithm loops once per state, even small per-iteration errors compound, and this cascading inaccuracy ultimately sinks the entire run.

\textbf{Augmented MAS performance}. To eliminate the bottleneck, we fine-tune the Trimmer’s GPT-4o-mini backbone (Figure \ref{fig:ksp_mas}G, highlighted as 'Augmented Trimmer'). Accuracy leapt to 95 \% for 1–8 feasible states, 89 \% for 9–16, 81 \% for 17–24, and 67 \% for 25–32 (Figure \ref{fig:ksp_mas}H). Adding a lightweight self-check—prompting the model to audit its own answer—preserved or marginally improved these gains. Replacing the bottleneck with the fine-tuned Trimmer lifts end-to-end KSP accuracy to near-saturation across sizes (Figure \ref{fig:ksp_mas}I): 95 \% for 3-item instances, 90 \% for 4, 95 \% for 5, 85 \% for 6, 76 \% for 7, and 70 \% for 8. A single targeted upgrade thus turns KtR into a consistently high-performing solver as the problem scales.

\subsection{Experiment result on TAP}

\subsubsection{Baseline LLM Performance}

\begin{figure}[htbp]
    \centering
    \includegraphics[width=0.45\textwidth]{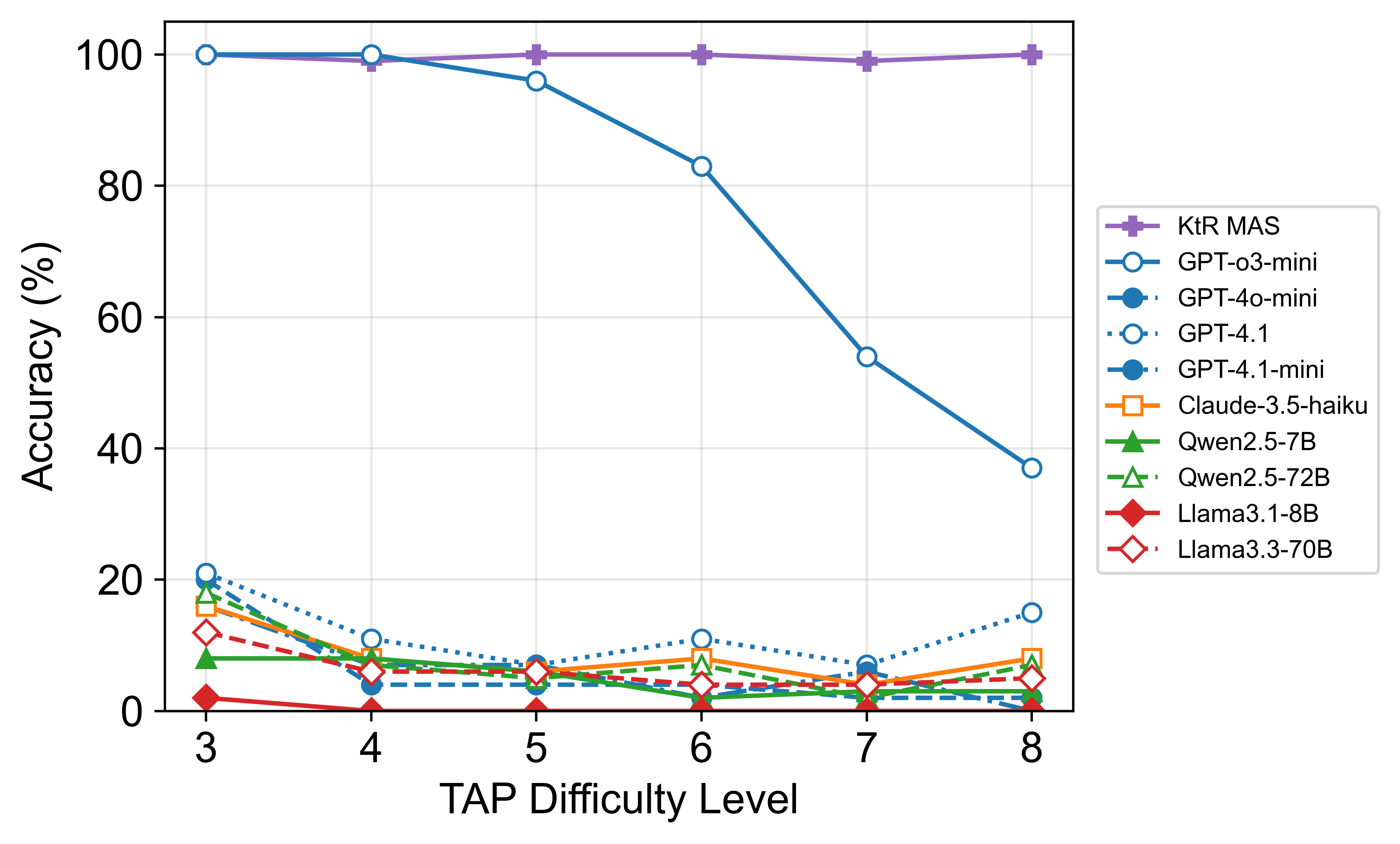}
    \vspace{-1em}
    \caption{TAP baseline performance from single LLMs as well as the KtR mulit-agent system.}
    \label{fig:tap_baseline}
    \vspace{-1em}
\end{figure}

Figure~\ref{fig:tap_baseline} illustrates the baseline performance of multiple LLMs on the TAP task across multiple difficulty levels (from 3 to 8 tasks). The results reveal marked differences in model capabilities. The only reasoning model, GPT-4o-mini, consistently outperforms all others, exhibiting strong accuracy at lower difficulty levels, though its performance declines as task complexity increases. In contrast, GPT-4.1 demonstrates moderate but stable accuracy across all difficulty levels, surpassing its mini-sized counterparts. Other models, including Claude-3.5-Haiku, Qwen2.5, and Llma-3 variants, show intermediate performance with variability.

We observe that {single-agent models (e.g., GPT-3-mini, GPT-4-mini, GPT-4.1) drop to 30-50\% accuracy at TAP levels 7-8, while KtR MAS maintains steady performance near 100\%, even surpassing reasoning models, demonstrating its exceptional robustness and generalization capabilities.}
\subsubsection{Multi-Agent Performance}\label{sec: TAP multi-agent performance}

Based on Figure~\ref{fig:tap_baseline}, GPT-o3-mini consistently outperforms other LLMs across all evaluated tasks, making it our choice for subsequent experiments. Our goal is to assess the scalability of our proposed strategy and investigate how its performance evolves as task difficulty increases. Figure~\ref{fig:tap_mas} illustrates the MAS design and corresponding experimental outcomes obtained using our heuristic-based approach.

\begin{figure*}[htbp]
    \centering
    \includegraphics[width=1\textwidth]{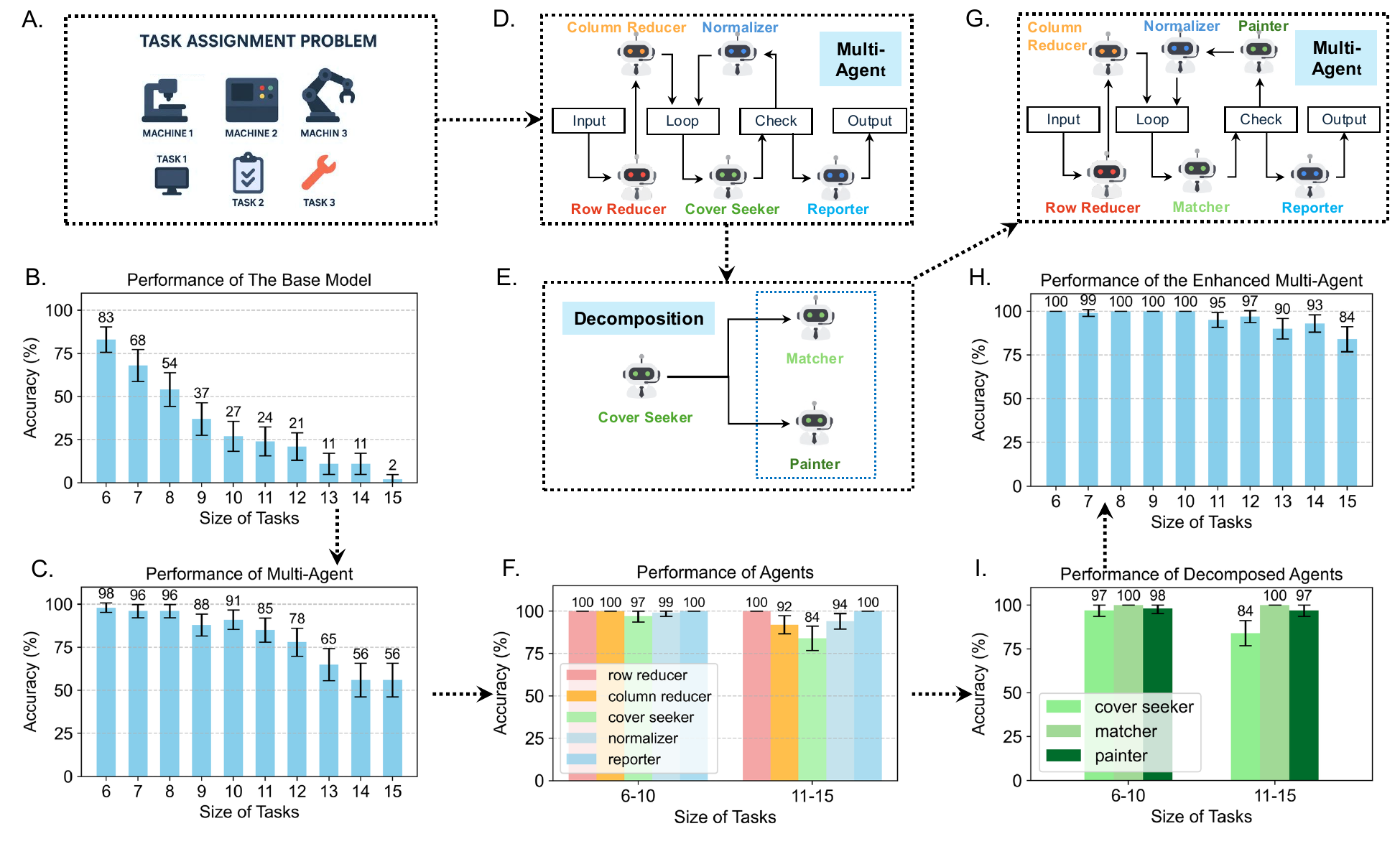}
    \caption{TAP evaluation of the KtR strategy. {\bf B}: Zero-shot accuracy of the baseline model.
{\bf D}: Initial blueprint derived from the Hungarian algorithm; its end-to-end accuracy is shown in {\bf C}.
{\bf F}: Per-agent accuracies within this blueprint, prompting the finer decomposition outlined in {\bf E}.
{\bf I}: Side-by-side comparison of per-agent accuracies before and after decomposition.
{\bf G}: Final, decomposed blueprint, whose overall accuracy appears in {\bf H}.}
    \label{fig:tap_mas}
\end{figure*}

{\bf Single LLM performance}.
Again, we evaluate the baseline performance of using o3-mini as a single agent. The o3-mini model achieves a relatively high performance (83\%) but decays quickly as in Figure \ref{fig:tap_mas}B: 37\% on problems of size 9, 21\% on problems of size 12 and finally is reduced to 3\% for problems of size 15.

{\bf Agent performance and further decomposition}.
Guided by the Hungarian algorithm \cite{kuhn1955hungarian}, our first KtR blueprint mapped each step to a single agent; Step 3 from Appendix \ref{sec: TAP problem solution} relied on a lone Cover Seeker rather than the later ``Matcher + Painter'' pair. This baseline already scored 98 \% (size 6), 88 \% (size 9), 78 \% (size 12), and 56 \% (size 15), validating the approach.

We then stress-tested each agent on two bands, with matrix sizes 6-10 and 11-15, to pinpoint weaknesses. One-shot agents were flawless: Row Reducer and Reporter reached 100 \% on both bands, and Column Reducer hit 100 \% / 92 \%. Normalizer held 99 \% / 94 \%, but Cover Seeker fell to 97 \% / 84 \%. Because Zero Seeker operates inside the main loop, its errors accumulate, making it the clear bottleneck for larger TAP instances.

We then perform a further decomposition of Step 3 in Section \ref{sec: KSP solution} in a two-step process: Step 3.1. Finding a {\bf maximal} collection of zero-entries, such that no two share a same row or column; Step 3.2. Finding a {\bf minimal} collection of rows and columns covering all zero-entries.

We believe this decomposition is helpful due to the following reasons. First, by a mathematical argument, the size of collections from sub-tasks 3.1 and 3.2 match. Second, a heuristic argument indicating that knowing the maximal collection of zeroes simplifies the task to find minimal collection of rows and columns. Last, the original Step 5 can then simply use the positions of the zeroes from Step 3.1, once optimal check passes. Note, this also explains why we prefer a further decomposition rather than fine-tuning the original agent, as a further decomposition improves the system flow as well.

Empirical pays off, as shown in Figure \ref{fig:tap_mas}I, Matcher reaches 100 \% accuracy on both difficulty bands, while Painter climbs to 98 \% and 97 \%—a sharp jump from the original Cover Seeker’s 97 \% and 84 \%. 

{\bf Final MAS performance}.
Leveraging the refined decomposition, we deploy a six-agent system (Figure \ref{fig:tap_mas}H) that solves size 6–10 instances almost flawlessly: almost 100 \% accuracy versus 83 – 27 \% for o3-mini zero-shot. It sustains high performance on size-11–15 tasks (95 \%, 97 \%, 90 \%, 93 \%, 84 \%); even the dip at size 15 far exceeds the 3 \% zero-shot baseline, highlighting the substantial capacity gain of our multi-agent architecture.

\section{Discussion}

Across two case studies, we show that disciplined decomposition combined with diagnosis-driven augmentation can raise low-capacity LLMs into dependable problem-solvers—and the gains grow further as the underlying model improves.

{\bf Proof-of-Concept on KSP}.
Baseline GPT-4o-mini accuracy collapsed as instance size increased, and a naïve three-agent blueprint offered little relief. Agent-level profiling singled out Trimmer as the lone bottleneck. Fine-tuning that agent alone—1 200 step-by-step examples, no changes elsewhere—lifted end-to-end accuracy from $\leq$ 18 \% to $\geq$ 70 \% across sizes 3–8, peaking at 95 \% on size 3. These results validate KtR’s “identify $\to$ isolate $\to$ augment” cycle: a targeted upgrade can rescue a small general-purpose model when both single-agent and untargeted multi-agent baselines fail.

{\bf Proof-of-Scalability on TAP}. Repeating the procedure with the stronger o3-mini model, we began with a five-agent design. Direct testing pinpointed a composite planning task that capped overall performance. KtR prescribes recursive refinement: we split that task into two simpler, typed sub-tasks, each amenable to the base model. Accuracy on the difficult sizes (11-15) rose from as low as 11 \% single-agent to $\geq$ 84 \% multi-agent, while sizes 6-10 reached 100 \%. Thus, as model capacity scales, KtR continues to amplify it rather than saturate.

Beyond these demonstrations, we would like to also outline several research threads that would deepen the approach in the future: 

$\bullet$ {\bf Model Portfolio Allocation.} Many leaves sit well below the flagship model’s frontier; mixing lighter or domain-tuned LLMs would slash cost while keeping accuracy, letting the controller pick ``just-enough'' capacity per task. 

$\bullet$ {\bf Complexity–Capacity Estimation.} Replace rule-of-thumb splits with a principled score that (i) quantifies task difficulty and (ii) predicts an LLM’s post-augmentation capacity, enabling data-driven decompositions.


$\bullet$ {\bf End-to-End Automation}. With the above metrics, the ultimate goal is to design a system that automates the whole KtR methodology: decomposing the tasks, evaluate capacities, and assemble the multi-agent system for solution.

\section{Limitations}

Despite demonstrating sizable performance gains, our study has several limitations that should guide future work.

{\bf Narrow task scope.} We evaluate KtR on two canonical optimization problems (Knapsack and Task-Assignment). While chosen to illustrate proof-of-concept and proof-of-scalability, these tasks have well-structured objective functions and small action spaces; generalizing to open-domain reasoning or multi-modal settings remains untested.

{\bf Synthetic data \& idealized inputs.} All problem instances are randomly generated and fully specified. Real-world inputs (noisy, partially observed, or adversarial) could degrade both decomposition quality and agent reliability.

{\bf Cost and latency trade-offs.} Although the per-agent inference cost is trending downward, we do not quantify absolute wall-clock latency, energy consumption, or controller overhead for large agent swarms.

{\bf Bottleneck identification heuristic.} We locate bottleneck subtasks via held-out accuracy screens; this assumes the availability of inexpensive ground-truth labels. Automated bottleneck detection without labels is an open problem.

\section{Ethical considerations}
Amplifying decision-making power by scaling agent crowds may exacerbate existing biases present in the base models; we have not run a bias or fairness audit.
Addressing these limitations—particularly expanding to less structured domains and benchmarking real-world resource usage—will be essential to establish the broader utility and safety of the KtR framework.

\appendix

\section{Appendix: Weighted No Free Lunch Theorem}

In this section, we present a weighted version of the No-Free-Lunch theorem. As the motivation, current approaches to MAS design can often result in overly general solutions that may exhibit suboptimal performance on specific and complex tasks. This sub-optimality arises partially from a lack of domain-specific inductive bias. To formalize this, we present a weighted variant of the No Free Lunch (NFL) theorem. The following demonstration, leveraging a weighted variant of the No Free Lunch theorem, quantitatively illustrates how inductive bias tailored to the target domain enhances performance.

That is, we present a formal proof showing that, under a non-uniform prior concentrated on a problem-specific subset of functions, a specialized learning algorithm achieves strictly lower expected risk than a general-purpose algorithm. 

Note the No-Free-Lunch theorem has been known to research community for more than two decades. Here what we present is a modification of the standard statement to better fit for our discussion on the MAS. As we don't find in literature the precise version of the NFL theorem as we stated below, we also present a proof for self-containedness. We do not claim any originality of the theorem and the proof.

\begin{theorem}[Weighted NFL]
Let $X$ be a finite input domain, $Y$ a finite label set, and 
$\mathcal{F}=Y^X$ the set of all functions $f\colon X\to Y$.  Consider
\begin{itemize}
  \item a general algorithm $A_0$ with constant expected loss $\varepsilon_0$ on every $f\in\mathcal{F}$,
  \item a specialized algorithm $A'$ satisfying
    \[
      L(h_{A'},f)\le
      \begin{cases}
        \varepsilon_1, & f\in\mathcal{F}',\\
        \varepsilon_2, & f\notin\mathcal{F}',
      \end{cases}
    \]
    where $\varepsilon_1<\varepsilon_0<\varepsilon_2$, and
  \item a prior $P$ with $P(f\in\mathcal{F}')=p$ and $P(f\notin\mathcal{F}')=1-p$.
\end{itemize}
If
\[
  p > \frac{\varepsilon_0-\varepsilon_2}{\varepsilon_1-\varepsilon_2},
\]
then the expected risk of $A'$ is strictly lower than that of $A_0$, i.e.
\[
  R(A') < R(A_0).
\]
\end{theorem}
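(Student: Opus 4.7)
The plan is to reduce the claim to a single linear-expectation comparison, with the only subtlety being a sign flip when dividing by a negative quantity. First I would compute $R(A_0)$ outright: since $L(h_{A_0},f)=\varepsilon_0$ for every $f\in\mathcal{F}$, the expectation under any prior equals $\varepsilon_0$, so $R(A_0)=\varepsilon_0$ with no dependence on $P$ or $\mathcal{F}'$. Next I would apply the law of total expectation to $R(A')$, splitting the integration over $\mathcal{F}$ into $\mathcal{F}'$ and its complement; using the piecewise upper bound on $L(h_{A'},f)$ yields
\[
R(A') \;\le\; p\,\varepsilon_1 + (1-p)\,\varepsilon_2.
\]

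It then suffices to establish the strict inequality $p\,\varepsilon_1+(1-p)\,\varepsilon_2<\varepsilon_0$, which rearranges algebraically to $p\,(\varepsilon_1-\varepsilon_2)<\varepsilon_0-\varepsilon_2$. Because $\varepsilon_1<\varepsilon_2$ the factor $\varepsilon_1-\varepsilon_2$ is strictly negative, so dividing both sides flips the direction and produces exactly the hypothesis $p>\frac{\varepsilon_0-\varepsilon_2}{\varepsilon_1-\varepsilon_2}$. Chaining the two bounds gives $R(A') \le p\,\varepsilon_1+(1-p)\,\varepsilon_2 < \varepsilon_0 = R(A_0)$, which is the desired conclusion.

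The main obstacle is essentially bookkeeping: I must confirm that the weak bound on $L(h_{A'},f)$ is compatible with the strict conclusion $R(A')<R(A_0)$. This is handled by observing that the threshold in the hypothesis is itself strict, so the convex combination $p\,\varepsilon_1+(1-p)\,\varepsilon_2$ sits strictly below $\varepsilon_0$; the weak inequality on $R(A')$ then still inherits strictness through the sandwich. A minor secondary check is to verify that $p\in[0,1]$ is consistent with the threshold, which follows from $\varepsilon_1<\varepsilon_0<\varepsilon_2$ (both numerator and denominator are negative and the quotient lies in $(0,1)$), so the hypothesis is nonvacuous.
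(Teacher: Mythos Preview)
Your argument is correct and follows the same direct-computation route as the paper: evaluate $R(A_0)=\varepsilon_0$, bound $R(A')$ by the convex combination $p\,\varepsilon_1+(1-p)\,\varepsilon_2$, and rearrange with attention to the sign of $\varepsilon_1-\varepsilon_2$. In fact you are slightly more careful than the paper, which silently treats the piecewise bound on $L(h_{A'},f)$ as an equality; your observation that the weak bound $R(A')\le p\,\varepsilon_1+(1-p)\,\varepsilon_2$ still chains to a strict conclusion because the hypothesis on $p$ is strict is exactly the right patch.
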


\begin{proof}
By definition,
\[
  R(A_0) = \mathbb{E}_{f\sim P}[L(h_{A_0},f)] = \varepsilon_0
\]
and
\[
  R(A') = \mathbb{E}_{f\sim P}[L(h_{A'},f)] = p\,\varepsilon_1 + (1-p)\,\varepsilon_2.
\]
Hence
\begin{equation*}
	\begin{aligned}
		R(A') < R(A_0) \Longleftrightarrow & p\,\varepsilon_1 + (1-p)\,\varepsilon_2 < \varepsilon_0\\
		\Longleftrightarrow & p(\varepsilon_1 - \varepsilon_2) > \varepsilon_0 - \varepsilon_2,
	\end{aligned}
\end{equation*}

which rearranges to
\[
  p > \frac{\varepsilon_0 - \varepsilon_2}{\varepsilon_1 - \varepsilon_2}.
\]
This completes the proof.

\end{proof}

\section{Appendix: KSP and TAP description}\label{app: KSP and TAP}
In this appendix, we provide details about the KSP and TAP, including their problem description and algorithm based on which we design our MAS.

\subsection{KSP Problem Formulation}
The usual input of KSP involves a set of $N$ items, whose items are characterized by pairs $(w_i,v_i)$ of weights $w_i$ and values $v_i$, as well as a capacity value $W$. The goal of KSP is to find a subset of items such that the total weight does not exceed the given capacity while the total value is maximized. Mathematically, we record information of items by two vectors, both of dimension $N$: a weight vector $\vec{w}=(w_1,\cdots,w_N)$ and a value vector $\vec{v} = (v_1,\cdots,v_N)$. We also introduce the set of state-vectors $\{0,1\}^N$, whose elements are vectors $\vec{x}=(x_1,\cdots, x_N)$ where entries $x_i$ takes values between $0$ and $1$, indicating whether an item is chosen in a subset or not:
\begin{equation*}
    x_i = \begin{cases}
        1 & \text{item }i\text{ is chosen}\\
        0 & \text{item }i\text{ is excluded}
    \end{cases}
\end{equation*}

Thus state vectors controls which items is in the chosen subset, and the inner product of $\vec{x}$ with $\vec{w}$ and $\vec{v}$ then compute the total weight and total value for the given subset, respectively.

Given a weight vectors $\vec{w}$, a value vector $\vec{v}$, and the capacity constraint $W$, the objective of the Knapsack problem then can be formulated as finding the following (optimal) value:
\begin{equation}\label{eq:knapsack-obj}
    Z = \max_{\substack{\vec{x}\in \{0,1\}^N \\ \vec{x}\cdot \vec{w}\leq W}} \vec{x}\cdot \vec{v}.
\end{equation}

Here the maximal value is taken over all state vectors (or equivalently, all subsets of items) satisfying the constraint that the total weight $\vec{x}\cdot \vec{w}$ not exceeding the capacity $W$.

This version of the problem, where each item can either be fully included or not at all, is commonly known as the 0/1 KSP.

\subsection{KSP Problem Solution}\label{sec: KSP solution}

A classic approach to the Knapsack Problem iteratively enumerates all feasible states—a dynamic-programming strategy first introduced by Bellman \cite{bellman1957dynamic}. A feasible state can be defined by a pair $(current\_weight, current\_value)$ representing the accumulated weight and value of a set of items selected so far, such that $current\_weight \leq W$. We can describe the algorithm in the form of mathematical induction. We start with the initial set of feasible states $S_0 = \{(0,0)\}$, representing an empty set of chosen items. We then add items in to form a set $S_{k}$ from $S_{k-1}$ inductively, with capacity being aware: for each $k$, assuming that $S_{k-1}$ has been constructed, then we add the pair $(w_k,v_k)$ to all items in $S_{k-1}$ to form a new set $S_{add}$:
\[
S_{add} = \{(w+w_k,v+v_k)~{\rm for~all~}(w,v)\in S_{k-1}\}.
\]
Then, we trim the set according to the capacity:
\[
S_{trimmed} = \{(w,v)\in S_{add}~|~w\leq W\}.
\]
Note this also removes all repetitive states in the set. 
Finally we take the union of the two intermediate sets to create $S_k$:
\[
S_k = S_{k-1}\cup S_{trimmed}.
\]

The inductive step terminate when we have run through all items and obtaining the final set $S_N$, we pick the element in $S_N$ with maximal value, as the solution to the KSP. Explicitly,
\[
Z = \max_{(w,v)\in S_N} v
\]

\subsection{TAP Problem Formulation}
TAP seeks to optimally assign a set of $N$ resources (agents or workers) to $N$ tasks, where each potential assignment incurs a specific cost. With the constraint that each resource can only be assigned to one unique task, the objective of TAP is to find an assignment covering all tasks that minimizes the total cost. The resource-task specific cost is recorded in an $N \times N$ matrix $C$, where the entry $C_{ij}$ represents the cost associated with assigning resource $i$ to task $j$, for $i, j \in \{1, 2, \ldots, N\}$.

To formally define the problem, we introduce a set $\mathfrak{S}_n$ which can be described in either one of the following three equivalent ways:

$\bullet$ The group of automorphisms of the set $\underline{N} = \{1,2,\cdots, N\}$.

$\bullet$ The set (or group) of bijections from the set $\underline{N}$ to itself.

$\bullet$ The set of all permutations involving $N$ elements.

Note elements in $\mathfrak{S}_N$ convey the idea that each resource is assigned to a unique task.

Now, given the $N\times N$ cost matrix $C$, the objective of the TAP is to find the following (optimized) value
\[
Z = \max_{\sigma\in\mathfrak{S}_N}\left(\sum_{i=1}^NC_{i\sigma(i)}\right).
\]

Note when we treat $\sigma\in \mathfrak{S}_N$ as a (bijective) map from $\underline{N} = \{1,2,\cdots, N\}$ to itself, the notation $C_{i\sigma(i)}$ represents the entry on the $i$-th row and $\sigma(i)$-th column of the cost matrix $C$.

\subsection{TAP Problem Solution}\label{sec: TAP problem solution}

The typical solution for TAP is using the Hungarian algorithm \cite{kuhn1955hungarian}, which provides a polynomial-time method to find the objective value $Z$. We summarize the algorithm as follows.

\textbf{Step 1. Row Reduction.} For each row, we find the minimal element in the row and subtract it from all entries in the row, creating at least one zero on each row. Mathematically, starting from the original cost matrix $C_{N\times N}$, we create a new reduced matrix $C'$ such that for $i,j\in\{1,2,\cdots,N\}$, we have
\[
C'_{ij} = C_{ij} - \min_{1\leq k\leq N} C_{ik}
\]

\textbf{Step 2. Column Reduction.} Similarly, we further reduce $C'$ to $C''$ as follows: For each column, we find the minimal element in the column and subtract it from all entries in the column, guaranteeing at least one zero on each column. Mathematically, take
\[
C''_{ij} = C'_{ij} - \min_{1\leq k\leq N} C_{kj}
\]

\textbf{Step 3. Find covering lines.} We then find a smallest collection of rows and columns to cover all zeroes. Here smallest is the sense of the number of elements in the collection (of rows and column), over all possible such collections. If the size of this minimal collection, denoted by $L$, coincides with $N$, the size of the problem, then we skip Step 4 to enter the final stage of the algorithm.

\textbf{Step 4. Matrix Improvement.} However, if $L < N$, we need an improvement for the matrix $C''$ before looping back to Step 3: given the minimal collection of rows and columns from Step 3, we find the minimal value of all entries that are not covered, and then subtract this minimal value from all uncovered entries of $C''$, and then add this minimal value to all entries of $C''$ that are covered twice, i.e., by both rows and columns. Let $C'''$ be the resulting matrix.

\textbf{Step 5. Assignment Identification}. Once the condition $L=N$ is met, the final step is to identify the optimal assignment. This involves selecting a set of $N$ independent zeros from the current matrix $C'''$, such that no two selected zeros share the same row or column. Each selected zero at position $(i,j)$ corresponds to assigning agent $i$ to task $j$. The total cost of this optimal assignment is then calculated by summing the costs from the original cost matrix $C$ corresponding to these selected zero positions.

A non-trivial fact guaranteed by the Hungarian algorithm is that, in Step 5 the collection of zeroes might not be unique, while different collections are deemed to result in the same total summation of corresponding entries in the original cost matrix $C$.

\section{Ground-truth Data Preparation}\label{app: ground-truth}

We utilize Google OR-Tools \cite{ortools} to generate optimal solutions—serving as ground-truth datasets—for both problem scenarios. OR-Tools is a widely adopted open-source software suite developed by Google for solving combinatorial optimization problems. It is renowned for its efficiency and reliability in addressing NP-hard challenges through advanced optimization algorithms. The suite is distributed under the permissive Apache License 2.0, allowing unrestricted use, modification, and distribution \cite{ortools}.

For KSP, we generate random instances by assigning weights and values to items along with a maximum capacity constraint. Optimal solutions are then computed using OR-Tools’ dynamic programming approach. For TAP, we similarly generate random cost matrices that represent the cost of assigning workers to tasks. Optimal assignments are obtained using the Hungarian algorithm as implemented in OR-Tools, which efficiently minimizes the total assignment cost.

\section{Appendix: Prompt gallery}\label{app: prompts}
Note that all prompts we presented in the following, except for the self-check prompt for Trimmer Agent in KSP problem, are the system prompt for agents. The user prompt will only contain the precise problem to be handled by the agent in the form specified by the prompt.

\subsection{KSP prompts}
\subsubsection{Prompt for zero shot}
\begin{lstlisting}
You are an expert in the field of Knapsack Problem.

You are given a Knapsack Problem in the json format, of the following form:
{
    "id" : str,
    "items" : list of pairs of integers,
    "capacity" : int
}

Each pair in the list is a pair of integers of the form [weight, value], i.e., the first entry is the weight and the second entry is the value.

Your task is to solve the Knapsack Problem and provide the optimal solution. That is, you need to find a subset of the pairs that maximizes the total value, subject to the constraint that the total weight of the subset is less than or equal to the capacity.

Please think step by step when solving the problem.

You need to return the optimal solution in the following json format:
{
    "max_value" : int,
}
Please only return the json format, nothing else.
\end{lstlisting}

\subsubsection{Prompt for Worker Agent}
\begin{lstlisting}
You are a key member of a multi-agent team collaboratively solving the Knapsack Problem. Your specific role is the Worker, responsible for performing mathematical computations for the team.

You will receive input in the following JSON format:
{"c_list": [[int, int], ...], "s_item": [int, int]}
Each pair within 'c_list' contains two integers.

Your task is to:
- Add 's_item' to each pair in 'c_list' entry-wise. For instance, if a pair in 'c_list' is '[2, 5]' and 's_item' is '[3, 4]', the result should be '[2+3, 5+4] = [5, 9]'.

To ensure accuracy:
- Proceed systematically, applying step-by-step reasoning.
- Carefully perform each addition individually for all pairs provided in the list.

Your response must strictly follow this JSON format:
{"n_list": [[int, int], ...]}

Return only the specified JSON object without any additional commentary or text.
\end{lstlisting}

\subsubsection{Prompt for Trimmer Agent}
\begin{lstlisting}
You are a key member of a multi-agent team collaboratively solving the Knapsack Problem. Your specific role is the Trimmer, responsible for trimming the list based on the given capacity constraint.

You will receive input in the following JSON format:
{"n_list": [[int, int], ...], "capacity": int}

Each pair within 'n_list' contains two integers: the first integer represents the weight, and the second integer represents the value.

Your task is to:
- Carefully analyze each pair in the provided list.
- Remove all pairs whose weight (the first integer) strictly exceeds the specified capacity.
- If identical pairs appear multiple times, retain only one instance of each.

To ensure accuracy:
- Proceed systematically, applying step-by-step reasoning.
- Verify each pair carefully against the capacity constraint.

Your response must strictly follow this JSON format:
{"t_list": [[int, int], ...]}

Return only the specified JSON object without any additional commentary or text.
\end{lstlisting}

\subsubsection{Prompt for Reporter Agent}
\begin{lstlisting}
You are a key member of a multi-agent team collaboratively solving the Knapsack Problem. Your specific role is the Reporter, responsible for determining and clearly reporting the final answer based on the provided information.

You will receive input in the following JSON format:
{"c_list": [[int, int], ...]}
Each pair within 'c_list' contains two integers: the first integer represents the weight, and the second integer represents the value.

Your task is to carefully analyze this list, identify the pair with the maximal value (the second integer in each pair), and report only that maximal value. If the list is empty, then report the maximal value as 0.

To ensure accuracy:
- Proceed systematically, applying step-by-step reasoning.
- Carefully examine every pair in the provided list.

Your response must strictly follow this JSON format:
{"max_value": int}

Return only the JSON object as specified above, without any additional commentary or text.
\end{lstlisting}

\subsubsection{Self-check prompt for Trimmer Agent}
\begin{lstlisting}
To better fulfill your task, please conduct a double check on the result you just provided. If your answer is already correct, please confirm by copying the last output.

When double check, please pay attention to the following typical types of mistakes:

In particular, please check if you made any typical mistakes as listed below:
1. If you added in a pair that is not in the original n_list.
2. If there is still a pair in the t_list that still exceeds the capacity.
3. If there is a pair in n_list that does not exceed the capacity but is not in the t_list.

If you found any errors, please create a corrected answer.

In either case, please follow the format requirement of the output.
\end{lstlisting}

\subsection{TAP prompts}
\subsubsection{Prompt for zero shot}
\begin{lstlisting}
You are an expert in solving the Assignment Problem. In the assignment problem, there are n workers and n jobs. Each worker has a cost of assigning to each job. Each worker can only be assigned to one job. Your task is to find the optimal assignment of workers to jobs that minimizes the total cost.

You are given the problem in the following json format:

{
    "id" : str,
    "cost_matrix" : list of lists of integers
}

The cost matrix is a square matrix of size n x n, where n is the number of workers and jobs, in the form of a nested list [[int, int, ...], [int, int, ...], ...]. The (i, j)th entry of the matrix represents the cost of assigning the ith worker to the jth job.

Your task is to find the optimal assignment of workers to jobs that minimizes the total cost.

Please think step by step when solving the problem.

You need to return the optimal assignment in the following json format:

{
    "optimal_cost" : int
}

Please only return the json format, nothing else.
\end{lstlisting}

\subsubsection{Prompt for Row Reducer Agent}
\begin{lstlisting}
You are given a matrix in the following json format:

{
    "matrix" : list of lists of integers
}

The matrix is in the form of a nested list [[int, int, ...], [int, int, ...], ...].

Your task is to reduce the matrix by subtracting the minimum value of each row from all the elements in that row.

Please think step by step when solving the problem:
Step 0: Work on one row at a time.
Step 1: Find the minimum value of the row.
Step 2: Subtract the minimum value of the row from all the elements in that row.
Step 3: Return the reduced matrix in the following json format:

{"reduced_matrix" : list of lists of integers}

Please only return the json format, nothing else.
\end{lstlisting}

\subsubsection{Prompt for Column Reducer Agent}
\begin{lstlisting}
You are given a matrix in the following json format:

{
    "matrix" : list of lists of integers
}

The matrix is in the form of a nested list [[int, int, ...], [int, int, ...], ...].

Your task is to reduce the matrix by subtracting the minimum value of each column from all the elements in that column.

Please think step by step when solving the problem:
Step 0: Work on one column at a time.
Step 1: Find the minimum value of the column.
Step 2: Subtract the minimum value of the column from all the elements in that column.
Step 3: Return the reduced matrix in the following json format:

{"reduced_matrix" : list of lists of integers}

Please only return the json format, nothing else.
\end{lstlisting}

\subsubsection{Prompt for Cover Seeker Agent}
\begin{lstlisting}
You are given a problem in the following json format:

{
    "matrix" : list of lists of integers
}

The matrix is in the form of a nested list [[int, int, ...], [int, int, ...], ...].

Your task is to find a smallest collection of rows and columns of the matrix, such that any zeroes in the matrix is contained in a chosen row or column. Small means the sum of the sizes of the row and column collections is the smallest possible.

Please think step by step when solving the problem, and return your response in the following json format:

{"collum_collection" : [int, int, ...], "row_collection" : [int, int, ...]}

The integers in the collum_collection and row_collection are the indices of the rows and columns that you choose.

Please only return the json format, nothing else.
\end{lstlisting}

\subsubsection{Prompt for Matcher Agent}
\begin{lstlisting}
You are given a matrix in the following json format:

{
    "matrix" : list of lists of integers
}

The matrix is in the form of a nested list [[int, int, ...], [int, int, ...], ...].

Your task is to find the largest collection of zeroes in the matrix, such that no two zeroes are in the same row or column.

Please think step by step when solving the problem, and return your response in the following json format:

{"largest_collection" : [[int, int], [int, int], ...]}

The list of pairs of integers is in the form of [[row_index, column_index], [row_index, column_index], ...].

Please only return the json format, nothing else.
\end{lstlisting}

\subsubsection{Prompt for Painter Agent}
\begin{lstlisting}
You are given a problem in the following json format:

{
    "matrix" : list of lists of integers
    "collection" : list of lists of integers
}

The matrix is in the form of a nested list [[int, int, ...], [int, int, ...], ...].

The collection is in the form of a nested list [[int, int], [int, int], ...].

Your task is to find a smallest collection of rows and columns of the matrix, such that any zeroes in the matrix is contained in a chosen row or column. Small means the sum of the sizes of the row and column collections is the smallest possible.

To assist you, you are provided with a collection of zeroes in the input json format. The collection contains the positions of a maximal collection of zeroes in the matrix, such that no two zeroes are in the same row or column.

Please use this collection of zeroes to find the rows and columns as desired. More precisely, you should first choose one row or column for each zero in the collection, such that the chosen rows and columns cover as much of the zeroes in the matrix as possible. Then add in more rows or columns if needed.

Please think step by step when solving the problem, and return your response in the following json format:

{"collum_collection" : [int, int, ...], "row_collection" : [int, int, ...]}

The integers in the collum_collection and row_collection are the indices of the rows and columns that you choose.

Please only return the json format, nothing else.

\end{lstlisting}

\subsubsection{Prompt for Normalizer Agent}
\begin{lstlisting}
You are given a problem in the following json format:

{
    "matrix" : list of lists of integers
    "collumn_collection" : list of integers
    "row_collection" : list of integers
}

The matrix is in the form of a nested list [[int, int, ...], [int, int, ...], ...].
The collumn_collection and row_collection are the indices of some selected rows and columns that covers all the zeroes in the matrix.

Your task is the following:
1. Find the minimal value in the matrix that is not covered by the selected rows and columns.
2. If this value is 0, return the original matrix.
3. If this value is not 0, subtract this value from all uncovered entries in the matrix.
4. For the entries that covered by both a selected row and a selected column, add this value to the entries.
5. For the entries that are covered by a selected row or column, but not both, do nothing.
6. Please return the updated matrix in the following json format:

{"normalized_matrix" : list of lists of integers}

Please only return the json format, nothing else.
\end{lstlisting}

\subsubsection{Prompt for Reporter Agent}
\begin{lstlisting}
You are given a problem in the following json format:

{
    "matrix" : list of lists of integers
    "collection" : list of lists of integers
}

The matrix is in the form of a nested list [[int, int, ...], [int, int, ...], ...].
The collection contains a set of entries of the matrix in the form of [[row_index, column_index], [row_index, column_index], ...].

Your task is the following:
1. Sum up the values of all the entries in the collection.
2. Return the total value in the following json format:

{"total_value" : int}

Please only return the json format, nothing else.
\end{lstlisting}

\bibliography{latex/custom}

\end{document}